\def\eqref#1{equation~\ref{#1}}
\def\1{\bm{1}}
\def\rve{{\mathbf{e}}}
\DeclareMathAlphabet{\mathsfit}{\encodingdefault}{\sfdefault}{m}{sl}
\SetMathAlphabet{\mathsfit}{bold}{\encodingdefault}{\sfdefault}{bx}{n}
\declaretheorem[name=Claim]{claim}
\useunder{\uline}{\ul}{}
\icmltitlerunning{Rank4Class: A Ranking Formulation for Multiclass Classification}
\begin{document}

\twocolumn[
\icmltitle{Rank4Class: A Ranking Formulation for Multiclass Classification}



\icmlsetsymbol{equal}{*}

\begin{icmlauthorlist}
\icmlauthor{Aeiau Zzzz}{equal,to}
\icmlauthor{Bauiu C.~Yyyy}{equal,to,goo}
\icmlauthor{Cieua Vvvvv}{goo}
\icmlauthor{Iaesut Saoeu}{ed}
\icmlauthor{Fiuea Rrrr}{to}
\icmlauthor{Tateu H.~Yasehe}{ed,to,goo}
\icmlauthor{Aaoeu Iasoh}{goo}
\icmlauthor{Buiui Eueu}{ed}
\icmlauthor{Aeuia Zzzz}{ed}
\icmlauthor{Bieea C.~Yyyy}{to,goo}
\icmlauthor{Teoau Xxxx}{ed}
\icmlauthor{Eee Pppp}{ed}
\end{icmlauthorlist}

\icmlaffiliation{to}{Department of Computation, University of Torontoland, Torontoland, Canada}
\icmlaffiliation{goo}{Googol ShallowMind, New London, Michigan, USA}
\icmlaffiliation{ed}{School of Computation, University of Edenborrow, Edenborrow, United Kingdom}

\icmlcorrespondingauthor{Cieua Vvvvv}{c.vvvvv@googol.com}
\icmlcorrespondingauthor{Eee Pppp}{ep@eden.co.uk}

\icmlkeywords{Machine Learning, ICML}

\vskip 0.3in
]


\begin{abstract}
Multiclass classification (MCC) is a fundamental machine learning problem of classifying each instance into one of a predefined set of classes. Given an instance, an MCC model computes a score for each class, all of which are used to sort the classes. The performance of a model is usually measured by Top-$K$ Accuracy/Error (e.g. $K=1$ or $5$). In this paper, we do not aim to propose new neural network architectures as most recent works do, but to show that it is promising to boost MCC performance with a novel formulation through the lens of ranking. In particular, by viewing MCC as \emph{an instance class ranking problem}, we first argue that ranking metrics, such as Normalized Discounted Cumulative Gain, can be more informative than the existing Top-$K$ metrics. We further demonstrate that the dominant neural MCC recipe can be transformed to a \emph{neural ranking pipeline}.
Based on such generalization, we show that it is intuitive to leverage techniques from the rich information retrieval literature to improve the MCC performance out of the box. Extensive empirical results on both text and image classification tasks with diverse datasets and backbone neural models
show the value of our proposed framework.
\end{abstract}

\section{Introduction}
\label{sec:intro}
Multiclass classification (MCC) is the problem of classifying each instance into one of \emph{a predefined set of classes}~\citep{hastie01statisticallearning, bishop2016pattern}. It is one of the most fundamental machine learning problems that has broad applications in many fields such as natural language processing~\citep{sun2020finetune} and computer vision~\citep{he2016resnet}. For example, deciding the category of a news article \citep{minaee2021deep} or the subject of an image \citep{krizhevsky2012imagenet} is formulated as an MCC problem. 

Numerous MCC models have been proposed in the past, ranging from the linear models to nonlinear decision trees and neural models~\citep{aly05survey}. In the modern deep learning era, while there are significant advances in neural network architectures, dominant MCC methods share the same recipe: an input instance, being it a feature vector, a sentence, or an image, is fed into a neural model and scored against a predefined set of classes. The model is trained by using a loss function, typically the \emph{softmax cross entropy loss}, between the labels and scores over all candidate classes~\citep{goodfellow2016deep}. During inference, the classes are sorted after an instance is scored against them. Metrics such as Top-$K$ Accuracy (the percentage of test instances whose correct class label is in the top $K$ predicted classes, also known as simply ``classification accuracy'' when $K=1$) are usually used for evaluating MCC performance. To better display the models' headroom, Top-$K$ Error ($1-$ Top-$K$ Accuracy) is widely adopted to compare different models \cite{krizhevsky2012imagenet}. Following this recipe, most efforts in the literature focus on designing more powerful neural network architectures for representation learning to improve MCC performance~\citep{he2016resnet, densenet, dosovitskiy2020vit, minaee2021deep}. Few works have studied a formulation different from the recipe above.

Different from existing work, we propose a novel formulation for MCC by examining it \emph{through the lens of ranking}, or more specifically, learning to rank (LTR), a rich research field stemming from information retrieval (IR)~\citep{liu2009learning}. As shown in the rest of the paper, firstly, such a formulation allows us to better evaluate MCC performance by applying more informative ranking metrics. Secondly, it improves MCC performance out of the box, agnostic to the representation learning architecture used. This is achieved by utilizing advanced ranking losses, and more flexible matching schemes between the input instance and candidate classes.

In this paper, we first formalize MCC as a ranking problem. As hinted above, dominant neural MCC models score a given instance on a predefined set of classes and sort them. This is equivalent to a LTR setting where a set of items (i.e., classes) are scored and ranked given a query (i.e., the input instance). In fact, a Top-$K$ Accuracy measure itself can be viewed as a ranking metric, similar to Precision@$K$ in the IR literature. However, other ranking metrics such as Normalized Discounted Cumulative Gain are more commonly used in IR because they are more informative than Precision@$K$ in real-world user-facing applications. They can be borrowed to enrich the MCC evaluation metrics, but have not been adopted widely. We provide an in-depth discussion of ranking metrics for better MCC evaluation, which also motivates our modeling efforts in the new formulation.

For modeling, we show a general \emph{``equivalent view''} for MCC from a ranking perspective, where the dominant MCC model architecture is equivalent to a neural ranking architecture with \emph{a specific set of design choices}. This view gives us insights into existing MCC models' groundings as well as limitations, which in turn inspires more design options. We further propose several intuitive approaches under the LTR formulation to improve MCC performance from two aspects: loss functions and model architectures. For loss functions, we leverage the rich LTR study of advanced ranking losses specifically designed for certain metrics. For model architecture, we realize that the vast MCC literature focuses on only one component in the equivalent ranking architecture, i.e., the input instance embedding. With the LTR view, we can enhance the modeling capacity of other components as well. In this paper, we especially study the effect of enhancing the interactions between instances and classes, a popular setup in LTR to match queries and items~\citep{li2014semantic}.

We report experimental results on a variety of MCC tasks, including different datasets and backbone models for different modalities. Examples include text classification on GoEmotions~\citep{demszky2020goemotions} with BERT~\citep{bert} and image classification on ImageNet~\citep{krizhevsky2012imagenet} with ResNet~\citep{he2016resnet}. Results show that the proposed methods outperform or perform competitively with existing MCC models in all settings. We expect that the promising results can encourage the community to further examine MCC from LTR perspectives.

By exploring the connection between LTR and MCC, our contributions can be summarized as follows.
\setlist{nolistsep}
\vskip 0em
\begin{itemize}[noitemsep]
    \item We prove that ranking metrics are more informative than the Top-$K$ Accuracy/Error for MCC evaluation and propose to use them additionally (\Cref{sec:metrics}).
    \item We show mathematically that dominant neural MCC architecture is equivalent to a specific ranking architecture with several limitations (\Cref{sec:view}).
    \item Under the LTR formulation, we show that it is natural to adopt advanced ranking losses and richer instance-class interactions for MCC (\Cref{sec:method}).
    \item We conduct experiments on a wide range of MCC tasks to validate the effectiveness and generality of our LTR formulation for MCC (\Cref{sec:exp}).
\end{itemize}


\section{Ranking Metrics for MCC}
\label{sec:metrics}
\subsection{Metrics for Classification}
The basic binary classification problem classifies instances into positive and negative classes. MCC extends binary classification to more than two classes. Going from binary to multiclass is not trivial for evaluation. Binary classification metrics are usually \emph{class-oriented}. For example, metrics such as AUC and Accuracy are based on measures like true positives (TP) and false negatives (FN), which are computed with respect to positive and negative classes~\citep{sokolova2009metric}. These metrics are not directly used in MCC for more than two classes. In contrast, MCC metrics are usually \emph{instance-oriented}. The commonly used metrics are the Top-$K$ Accuracy/Error metrics, popularized by the ImageNet competition~\citep{ILSVRC15}. Some earlier works like \cite{crammer2002svm} defined the ``empirical error'' when working on multiclass SVM algorithms, which is equivalent to Top-1 Error. 

\subsection{Relation to Ranking Metrics}
\begin{figure*}
    \centering
    \includegraphics[width=0.75\textwidth]{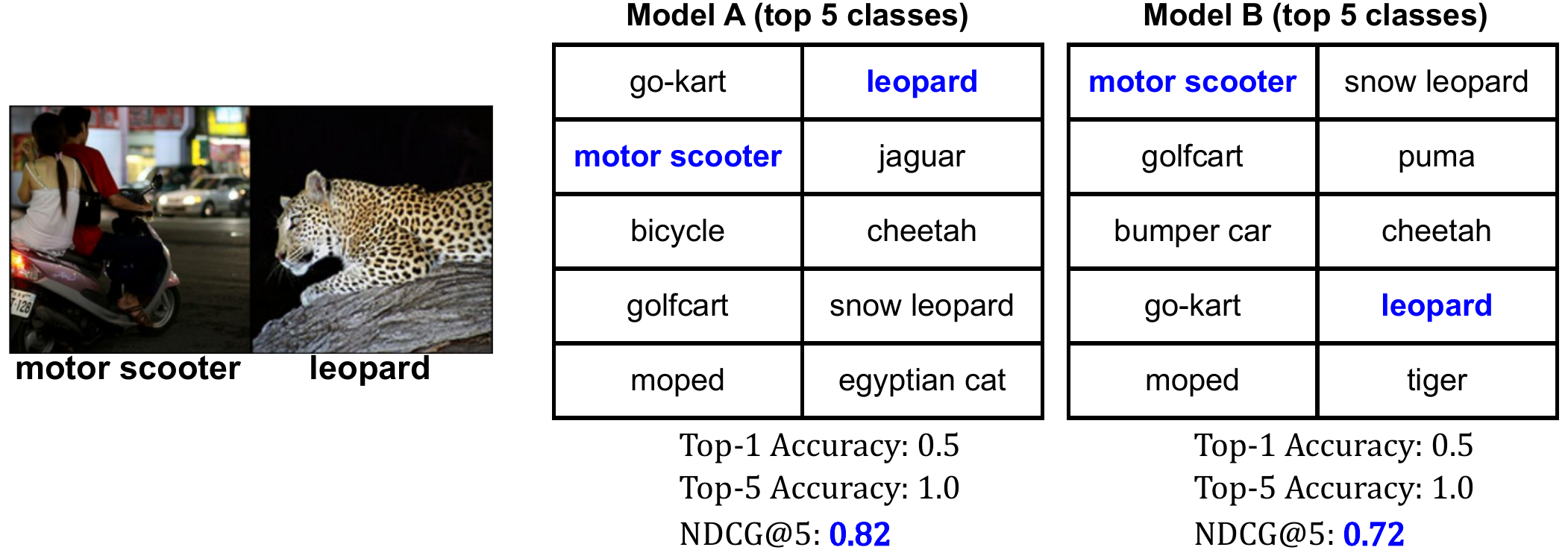}
    \vspace{-0.5em}
    \caption{We show an simulated example of evaluating image classification performance of two models with Top-1/5 Accuracy and NDCG@5. The Top-1/5 Accuracy of two models are exactly the same, and can not differentiate the two models. While NDCG@5 can successfully tell that model A has a better performance than model B.}
    \vspace{-0.7em}
    \label{fig:ndcg}
\end{figure*}
For easier illustration, we use Top-$K$ Accuracy in the rest of this section, which is simply $1 -$ Top-$K$ Error. As discussed above, MCC metrics are instance-oriented, and the metric value is simply averaged over all instances in a given evaluation set. Thus we study the metric calculated on a single instance to be more concise. 
In particular, given an instance and $n$ candidate classes, let $\mathbf y$ be the true labels of the classes and $\mathbf y[c] \in \{0, 1\}$ be the label for class $c$. We consider the standard MCC setting where there is exactly one class with label $1$ per instance \cite{aly05survey,bishop2016pattern}. Let $\pi_A$ be a ranking of classes produced by model A, with the i-th most likely class being $\pi_A(i)$. Then the Top-$K$ Accuracy ($ 1 \leq K \leq n$, abbreviated as acc@$K$ in Eq~\ref{eq:topk-acc}) is
\begin{equation}
\label{eq:topk-acc}
    \textrm{acc@$K$}(\pi_A, \mathbf y) = \sum_{i=1}^{K} \mathbf y[\pi_A(i)].
\end{equation}
The Top-$K$ Accuracy can be thought of a type of a ranking metric. It is very close to the Precision@$K$: Top-$K$ Accuracy is the same as $\min(1, K \cdot \textrm{Precision@$K$})$ and Top-1 Accuracy is the same as Precision@1. Besides Precision@$K$, there are other commonly used ranking metrics such as Normalized Discounted Cumulative Gains (NDCG) and Mean Reciprocal Rank (MRR)~\citep{ndcg2002paper}. To the best of our knowledge, they are not commonly used for MCC evaluation, but can be more informative. 

We use NDCG@$K$ (simply called NDCG when $K=n$) as an example. Consider the instance as a query, and all the classes as candidate items with relevance labels $\mathbf y$, we have 
\begin{align} 
\label{eq:ndcg}
    \textrm{NDCG@$K$}(\pi_A, \mathbf y) = \frac{\textrm{DCG@$K$}(\pi_A, \mathbf y)}{\textrm{DCG@$K$}(\pi_*, \mathbf y)}, \\
    \nonumber \textrm{and}\; \textrm{DCG@$K$}(\pi, \mathbf y) = \sum_{i=1}^K \frac{2^{\mathbf y[\pi(i)]}-1}{\log_2(1+i)}.
\end{align}
where $\pi_*$ is the ideal ranking sorted by $\mathbf y$. It's easy to prove that NDCG@1 is equivalent to Top-1 Accuracy. 
\begin{restatable}[]{theorem}{ndcgvsacc}
\label{theorem:ndcgvsacc} 
Given an instance with class labels $\mathbf y \in \{0,1\}^n$ and $\sum_{c=1}^n \mathbf y[c]=1$, for any $1 < K \leq n$, NDCG@$K$ preserves more information than Top-$K$ Accuracy in evaluating the class rankings.
\end{restatable}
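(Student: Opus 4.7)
The plan is to formalize ``preserves more information'' via equivalence relations on rankings: for a metric $M$, declare $\pi_A \sim_M \pi_B$ iff $M(\pi_A,\mathbf y)=M(\pi_B,\mathbf y)$, and show that $\sim_{\text{NDCG@}K}$ is a strict refinement of $\sim_{\text{acc@}K}$. This captures exactly what Figure~\ref{fig:ndcg} illustrates: NDCG@$K$ distinguishes every pair that acc@$K$ distinguishes, plus some additional pairs.

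First, I would exploit the single-positive assumption $\sum_c \mathbf y[c]=1$ to reduce both metrics to a function of one scalar. Let $r(\pi)\in\{1,\dots,n\}$ denote the rank assigned by $\pi$ to the unique true class. Plugging into Eq.~\ref{eq:topk-acc} gives $\textrm{acc@}K(\pi,\mathbf y)=\mathbf 1[r(\pi)\le K]$, a $\{0,1\}$-valued quantity. Plugging into Eq.~\ref{eq:ndcg}, the ideal DCG@$K$ equals $1$ (true class at position 1), so
\[
\textrm{NDCG@}K(\pi,\mathbf y)=\frac{\mathbf 1[r(\pi)\le K]}{\log_2(1+r(\pi))}.
\]
Thus NDCG@$K$ takes the value $0$ when $r(\pi)>K$, and takes one of $K$ distinct positive values $1/\log_2(1+i)$ for $i=1,\dots,K$ otherwise.

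Second, I would verify refinement: if $\textrm{NDCG@}K(\pi_A,\mathbf y)=\textrm{NDCG@}K(\pi_B,\mathbf y)$, then the two values are either both zero or both positive, which by the formula above means $\mathbf 1[r(\pi_A)\le K]=\mathbf 1[r(\pi_B)\le K]$, hence the Top-$K$ Accuracies coincide. Equivalently, the map sending an NDCG@$K$ value to its indicator of positivity is well-defined, so every NDCG@$K$ equivalence class sits inside a single acc@$K$ equivalence class.

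Third, I would exhibit witnesses to strictness, which is where the hypothesis $K>1$ is used. For any $K>1$, choose two rankings $\pi_1,\pi_2$ with $r(\pi_1)=1$ and $r(\pi_2)=2$; both satisfy $\textrm{acc@}K=1$, yet $\textrm{NDCG@}K(\pi_1,\mathbf y)=1\ne 1/\log_2 3=\textrm{NDCG@}K(\pi_2,\mathbf y)$. So within the acc@$K=1$ class, NDCG@$K$ induces at least two sub-classes, and the refinement is strict. There is no real obstacle here; the only subtle point is pinning down a precise meaning of ``more informative,'' and the equivalence-class/partition-refinement formulation is the cleanest choice and makes both directions a one-line check from the closed-form expressions above.
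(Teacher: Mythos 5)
Your proposal is correct, but it formalizes and proves the theorem by a genuinely different route than the paper. You interpret ``preserves more information'' combinatorially: both metrics factor through the rank $r(\pi)$ of the unique correct class, the map $\textrm{NDCG@}K \mapsto \mathbf 1[\textrm{NDCG@}K>0]$ recovers acc@$K$, so the partition of rankings induced by NDCG@$K$ refines the one induced by acc@$K$, and the pair $r=1$ versus $r=2$ (legal precisely because $K>1$) shows the refinement is strict. The paper instead formalizes information as Shannon entropy: it posits a distribution over rankings with $p_i$ the probability that the correct class lands at position $i$, notes that NDCG@$K$ takes $K+1$ possible values versus two for acc@$K$, and computes $H(\textrm{NDCG@}K)-H(\textrm{acc@}K)=\sum_{i=1}^{K} p_i\bigl(\log p_+ - \log p_i\bigr)\ge 0$ with $p_+=\sum_{i=1}^K p_i$. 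The two formalizations are compatible: your refinement statement implies the paper's entropy inequality, since acc@$K$ is a deterministic coarse-graining of NDCG@$K$ and coarse-graining cannot increase entropy. What each buys: your argument is distribution-free and yields an unconditional \emph{strict} separation (there always exist rankings NDCG@$K$ distinguishes that acc@$K$ cannot), whereas the paper's entropy gap quantifies the advantage in bits for a given evaluation distribution but can collapse to equality for degenerate distributions (e.g., all mass on a single top-$K$ position), so strictly speaking it only establishes ``at least as much'' information. Both directions of your check are sound, including the observation that the ideal DCG@$K$ equals $1$ under the single-label assumption.
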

The proof of \Cref{theorem:ndcgvsacc} is in \Cref{appendix:proof}. The basic idea is to measure the information of a metric by the entropy of its value evaluated on any class ranking as a random variable. This information measure indicates that, given the metric's value, how much can we say about its input class ranking. 

\Cref{theorem:ndcgvsacc} shows that for $K>1$, NDCG@$K$ is strictly more informative than Top-$K$ Accuracy. To better illustrate this, let us consider two models A and B, which rank the correct class at position $p_A$ and $p_B$, respectively, and $p_A < p_B \leq K, 2 \leq K \leq n$. Thus, both models can rank the correct class in the top $K$ positions while model A ranks it higher than model B. With Top-$K$ Accuracy, we have 
$    \textrm{acc@$K$}(\pi_A, \mathbf y) = \textrm{acc@$K$}(\pi_B, \mathbf y) = 1$.
But with NDCG@$K$, we have
$\textrm{NDCG@$K$}(\pi_A, \mathbf y) = 1/\log_2(1 + p_A)$, and $\textrm{NDCG@$K$}(\pi_B, \mathbf y) = 1/\log_2(1 + p_B)$, which gives us $\textrm{NDCG@$K$}(\pi_A, \mathbf y) > \textrm{NDCG@$K$}(\pi_B, \mathbf y)$.
Therefore, when model A consistently ranks the correct class higher than model B in top $K$ positions, Top-$K$ Accuracy may not be able to reflect the better performance of A, while NDCG@$K$ will always detect it. More importantly, although both metrics ignore the correct classes ranked below the $K$'th position, we can simply set $K = n$ to evaluate the whole ranked list of classes. In this case, NDCG is still a meaningful metric that reflects the position of the correct class in the full list, while Top-$K$ Accuracy becomes meaningless (always $1$).  

In Figure~\ref{fig:ndcg}, we simulate an image MCC application in a \emph{user-facing interface} and compare the results from two models with different metrics. NDCG@5 is clearly more capable of distinguishing the two models, while both Top-1 Accuracy and Top-5 fails to. The reason is that there is a position discounting function in NDCG@$K$, while Top-$K$ metrics impose a simple hard cut at position $K$, which loses information about the exact ranked position of the correct class. Other ranking metrics such as MRR share similar characteristics with NDCG. In the rest of this paper, we choose NDCG@$K$ as our representatives for ranking metrics and use it in addition to Top-$K$ metrics in experiments.

\section{MCC from a Ranking Perspective}
\label{sec:view}

\subsection{Classical MCC Model Architectures}
\label{sec:class-models}
The general model architecture for MCC based on deep neural networks (DNN) is composed of three parts: an input instance to be classified, an encoder to extract latent representation (embedding) of the input, and a classification layer for generating scores on candidate classes, as shown in Figure~\ref{fig:equivalence} \textbf{Left}. We use $\mathbf{x}$ to represent the input instance such as a textual sentence or an image. The encoder can have different structure designs based on the modality of the input, such as transformers \citep{vaswani2017attention} to encode textual sentences; and convolutional neural networks \citep{krizhevsky2012imagenet} to encode images. It can be represented by a function $\mathcal{H}(\cdot)$ that maps $\mathbf{x}$ to a $d$-dimensional embedding vector $\mathbf{h}=\mathcal{H}(\mathbf{x})$. The classification layer is in most cases a dense layer with weight matrix $W\in\mathbb R^{n\times (d+1)}$. The classification scores are calculated by $\mathbf{s}=W\mathbf{h}'$, where $\mathbf{h}'\coloneqq [\mathbf{h},1]$ with an added bias dimension of value $1$ for the bias. $\mathbf{s}$ is an $n$-dimensional score vector for $n$ classes with $s_i=\rve_i^\top W\mathbf{h}'$ for the $i$-th class, where $\rve_i$ is a $n$-dimensional one-hot vector with the $i$-th dimension being $1$.

For training neural MCC models, the softmax cross entropy loss (SoftmaxCE) is used by default in almost all prior work \citep{goodfellow2016deep,zhang2021dive}. However, whether it is the most suitable loss for optimizing the evaluation metrics of interest is not carefully studied. Very recently, empirical results show that the mean squared error (MSE) can sometimes outperform SoftmaxCE in MCC tasks~\citep{hui2021evaluation}, but it is sensitive to an extra rescaling parameter for tasks with many classes. 

\subsection{An Equivalent View from Neural Ranking Models}
\label{sec:equivalence}

\begin{figure*}
    \centering
    \includegraphics[width=0.92\textwidth]{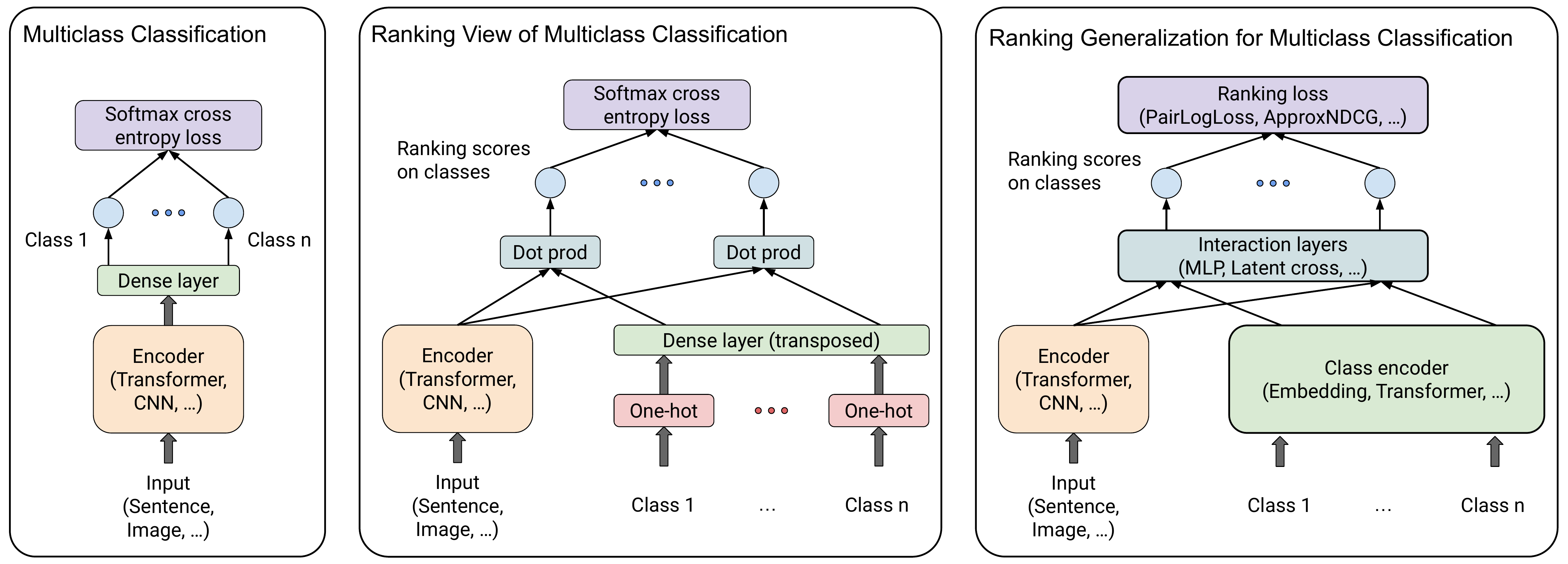}
    \vspace{-0.5em}
    \caption{
    \textbf{Left}: the classical DNN architecture for MCC; \textbf{Middle}: the equivalent neural ranking architecture; \textbf{Right}: The ranking generalization for MCC which allows the exploration of different class encoders, interaction layers, and loss functions.}
    \vspace{-0.5em}
    \label{fig:equivalence}
\end{figure*}

LTR learns a ranking model to rank a set of items (e.g., documents, news, etc.) based on their relevance to a query. Neural ranking models \citep{guo2019deep} adopt DNNs to match the query and items with their latent representations. 

\begin{claim}
\label{claim:equivalence} The classical DNN model for MCC as described in \Cref{sec:class-models} is equivalent to a specific neural ranking model trained with softmax cross entropy loss.
\end{claim}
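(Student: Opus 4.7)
The plan is to exhibit an explicit neural ranking model whose forward computation and training objective reproduce exactly those of the classical MCC model of \Cref{sec:class-models}, and then argue that this map is a bijection on the relevant components. Following Figure~\ref{fig:equivalence} \textbf{Middle}, the neural ranking model I will construct has three ingredients: (i) a query encoder that takes the instance $\mathbf{x}$ to the embedding $\mathbf{h}'=[\mathcal{H}(\mathbf{x}),1]\in\mathbb R^{d+1}$; (ii) a class encoder that takes class index $i\in\{1,\dots,n\}$ to the class embedding $\mathbf{w}_i\in\mathbb R^{d+1}$ defined as the $i$-th row of $W$ (equivalently, an embedding lookup initialized from $W$, or a dense layer applied to the one-hot vector $\rve_i$); and (iii) an interaction layer given by the dot product $\phi(\mathbf{h}',\mathbf{w}_i)=\mathbf{w}_i^\top\mathbf{h}'$.

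First I would verify that the per-class scores match on the nose: by construction $\phi(\mathbf{h}',\mathbf{w}_i)=\mathbf{w}_i^\top\mathbf{h}'=\rve_i^\top W\mathbf{h}'=s_i$, so the vector of ranking scores $\mathbf{s}_{\text{rank}}=(\phi(\mathbf{h}',\mathbf{w}_i))_{i=1}^n$ coincides with the classification score vector $\mathbf{s}=W\mathbf{h}'$. Consequently the induced ranking of classes during inference is identical, and any metric computed from the sorted class list (including the Top-$K$ and NDCG metrics of \Cref{sec:metrics}) is the same in the two views. Next I would observe that the trainable parameters on both sides are in bijection: the encoder parameters of $\mathcal{H}$ map to themselves, and the matrix $W$ maps to the concatenation of class embeddings $\{\mathbf{w}_i\}_{i=1}^n$; the appended bias coordinate in $\mathbf{h}'$ absorbs the bias column of $W$, so no parameter is lost or added.

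For the training objective, I would note that the softmax cross entropy loss used for MCC can be written purely in terms of the score vector and the one-hot label $\mathbf{y}$ as $\mathcal{L}(\mathbf{s},\mathbf{y})=-\sum_{i=1}^n\mathbf{y}[i]\log\frac{\exp(s_i)}{\sum_{j=1}^n\exp(s_j)}$. This is exactly the standard listwise softmax cross entropy ranking loss (the ListNet-style loss used in LTR) applied to the class list with binary relevance labels $\mathbf{y}$. Since the score vectors agree and the loss is the same function of scores and labels in both formulations, the two models produce identical forward values, identical losses, and thus identical gradients on the shared parameters; they are therefore equivalent as trainable systems.

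The main obstacle is a conceptual one rather than a technical one: pinning down what ``equivalent'' means precisely enough that the claim is non-vacuous but broad enough that the correspondence is informative. I would address this by stating equivalence at three concrete levels that all follow from the construction above: (a) parameter-space equivalence via the bijection $W\leftrightarrow\{\mathbf{w}_i\}$, (b) functional equivalence of the induced scoring/ranking function, and (c) training equivalence via identical loss and gradients. Having nailed these three, the claim follows, and as emphasized in Figure~\ref{fig:equivalence} \textbf{Right}, this also identifies the specific design choices (trivial class encoder, dot-product interaction, softmax cross entropy) that the standard MCC recipe implicitly commits to, thereby motivating the generalizations developed in the rest of the paper.
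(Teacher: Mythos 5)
Your construction is exactly the paper's argument: treat the instance as the query encoded by $\mathcal{H}$, take the class embeddings $\mathbf{c}_i = W^\top \rve_i$ (the rows of $W$, with the bias absorbed via the appended $1$ in $\mathbf{h}'$), score by dot product so that $s'_i = \mathbf{c}_i^\top\mathbf{h}' = \rve_i^\top W\mathbf{h}' = s_i$, and note that the softmax cross entropy objective is the same listwise loss in both views. Your added remarks on the parameter bijection and identical gradients are a slightly more explicit spelling-out of what ``equivalent'' means, but the route is the same as the paper's.
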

\vspace{-0.5em}
To prove Claim \ref{claim:equivalence}, we show the equivalent view of a classical MCC model as a neural ranking model in Figure~\ref{fig:equivalence} \textbf{Middle}. In particular, we treat the input instance $\mathbf{x}$ as the query, and the $n$ candidate classes as input items to be ranked by the neural ranking model. The same as in the classical MCC model, we apply the encoder $\mathcal{H}(\cdot)$ to $\mathbf{x}$ to get a $d$-dimensional embedding $\mathbf{h}=\mathcal{H}(\mathbf{x})$, and $\mathbf{h}'=[\mathbf{h},1]$. For representing the classes, we use one-hot vector $\mathbf e_i$ for the $i$-th class and obtain its embedding $\mathbf c_i = W^\top \mathbf e_i$, where $W$ is the weight matrix from the classification layer of 
the classical MCC model. 
Finally, the ranking score of the $i$-th class can be calculated by simple \emph{dot product} $s'_i \equiv \mathbf c_i^\top \mathbf h' = \rve_i^\top W \mathbf h' =s_i$.
In this way, we show the classical MCC model can be transformed to an equivalent neural ranking model when both are trained by the SoftmaxCE. 

\section{Ranking Architectures for MCC}
\label{sec:method}
In this section, we discuss the possible new designs of different components in the equivalent ranking view of MCC models. We call the resulting new framework ranking for multiclass classification, or \emph{Rank4Class}. 

As discussed in Section~\ref{sec:intro}, existing work mostly focus on representation learning in the instance encoder. From the ranking perspective, however, we can see that there are several other promising directions to improve MCC performance. First, existing work mainly use SoftmaxCE, while there exist many advanced ranking losses that can be leveraged. Second, the interaction between the instance and class embeddings is simply a dot product, while richer interaction patterns can be explored. Third, while existing work focus on instance encoder, the class encoder is a naive linear projection that uses one-hot encoding of classes as input, where more powerful class encoders with richer inputs (e.g., class metadata) can be deployed. We illustrate the potential improvements in Figure~\ref{fig:equivalence} \textbf{Right}.
In this paper, we focus on ranking losses and interaction patterns for improving MCC performance, and leave the rest for future work.

\subsection{Ranking Losses for MCC}
\label{sec:ranking-loss}
Although SoftmaxCE is widely adopted for training MCC models, whether it is the best option to optimize existing evaluation metrics is not clear~\cite{hui2021evaluation}. On the other hand, we have shown that ranking metrics are better for evaluating MCC tasks. Thus, ranking losses are natural considerations since many of them are theoretically grounded to directly optimize certain ranking metrics. Next, we first discuss the soundness of using SoftmaxCE in MCC tasks with respect to ranking metrics. Then we present two example ranking losses, i.e., the pairwise logistic loss (PairLogLoss) and the approximate NDCG loss (ApproxNDCG). 

We re-denote the label of the $i$-th class for an instance by $y_i\in\{0,1\}, i\in\{1,\dots,n\}$, where only the correct class has label $1$. To compute the SoftmaxCE, the classification scores $s_i, i\in\{1,\dots,n\}$ produced by the model are first projected to the probability simplex $\mathbf{p}$ by softmax activation as $p_i = \frac{e^{s_i}}{\sum_{j=1}^n e^{s_j}}$. Then SoftmaxCE is defined as $$\ell_{ce}(\mathbf y, \mathbf p) = -\sum_{i=1}^n y_i \log p_i.$$

Intuitively, the SoftmaxCE is promoting the correct class against all other classes since only the term with $y_i=1$ is counted in it. It can be viewed as a listwise ranking loss, which aims to rank the correct class above all other classes. 
\begin{restatable}[\citet{bruch2019analysis}]{theorem}{softmax}
\label{theorem:softmax}
Softmax cross entropy loss is a bound on mean reciprocal rank and mean normalized discounted cumulative gain in log-scale.
\end{restatable}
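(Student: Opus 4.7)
The plan is to establish the bound pointwise (per instance) first, then aggregate. Let $c^\star$ denote the unique correct class (i.e.\ the index with $y_{c^\star}=1$) and $r$ its 1-indexed rank in the descending sort of $\{s_i\}$. The first step is to rewrite $\ell_{ce}$ in a form comparable to $r$:
\begin{equation*}
\ell_{ce} = -\log p_{c^\star} = \log\sum_{j=1}^n e^{s_j-s_{c^\star}} = \log\!\Bigl(1 + \sum_{j\neq c^\star} e^{s_j-s_{c^\star}}\Bigr).
\end{equation*}
Next, writing $r = 1 + \sum_{j\neq c^\star}\mathbb{1}[s_j>s_{c^\star}]$ (breaking ties in favor of the wrong classes, for a worst-case bound) and using the elementary inequality $\mathbb{1}[x>0]\le e^x$ for all $x\in\R$ (immediate by splitting into $x>0$ and $x\le 0$), I get the key pointwise bound
\begin{equation*}
r \;\le\; 1 + \sum_{j\neq c^\star} e^{s_j-s_{c^\star}} \;=\; e^{\ell_{ce}}, \qquad \text{i.e.}\qquad \log r \;\le\; \ell_{ce}.
\end{equation*}

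From this master inequality, both claimed bounds follow. For \textbf{MRR}: the per-instance reciprocal rank satisfies $-\log(\mathrm{RR}) = \log r \le \ell_{ce}$. Averaging over the $|Q|$ evaluation instances and applying Jensen's inequality to the convex function $e^{(\cdot)}$ (equivalently, concavity of $\log$ on the average of $\mathrm{RR}$-values),
\begin{equation*}
-\log\mathrm{MRR} \;=\; -\log\!\Bigl(\tfrac{1}{|Q|}\sum_q \mathrm{RR}_q\Bigr) \;\le\; -\tfrac{1}{|Q|}\sum_q \log\mathrm{RR}_q \;\le\; \tfrac{1}{|Q|}\sum_q \ell_{ce,q},
\end{equation*}
which is exactly ``SoftmaxCE upper bounds MRR in log-scale.'' For \textbf{NDCG}: in the standard MCC setup with one relevant class, $\mathrm{NDCG}_q = 1/\log_2(1+r_q)$. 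From $r\le e^{\ell_{ce}}$ I get $\log_2(1+r)\le \log_2(1+e^{\ell_{ce}})$, hence $-\log\mathrm{NDCG}_q \le \log\log_2(1+e^{\ell_{ce,q}})$. I then aggregate across instances exactly as in the MRR step (Jensen on the concave $\log$) to conclude that the mean softmax cross entropy loss upper bounds $-\log$ of mean NDCG.

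The only non-routine step is the pointwise inequality $\mathbb{1}[x>0]\le e^x$; the rest is bookkeeping plus one application of Jensen's inequality in each case. The main subtlety I expect is the NDCG case, because the composition $\log\log_2(1+e^{\cdot})$ is not as clean as the linear bound obtained for MRR; to keep the statement ``in log-scale'' interpretable, it may be worth noting the weaker but nicer corollary via $\log_2(1+r)\le 1+\log_2 r$ for $r\ge 1$, which yields $-\log_2\mathrm{NDCG}\le 1 + \log_2(\ell_{ce}/\log 2)$ after the same averaging argument. Tie-handling in the definition of $r$ is a minor but real technicality; I will state explicitly that the bound is proved under adversarial tie-breaking so it applies in the worst case.
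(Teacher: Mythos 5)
Your argument is, in substance, the same one the paper relies on: the paper does not prove Theorem~\ref{theorem:softmax} itself but defers to \citet{bruch2019analysis}, and the proof there is exactly your two ingredients — the per-instance rank bound $\log r \le \ell_{ce}$ obtained from $\mathbb{I}_{s_j \ge s_{c^\star}} \le e^{s_j-s_{c^\star}}$ (equivalently, $\sum_j e^{s_j} \ge r\, e^{s_{c^\star}}$), followed by Jensen's inequality to pass from the mean of $-\log$ of the metric to $-\log$ of the mean metric. Your MRR half is complete and correct. One notational slip: you announce adversarial tie-breaking but write $\mathbb{I}_{s_j > s_{c^\star}}$; you should use $\mathbb{I}_{s_j \ge s_{c^\star}}$, which still satisfies $\mathbb{I}_{s_j \ge s_{c^\star}} \le e^{s_j - s_{c^\star}}$ (with equality at ties), so the worst-case bound goes through unchanged.

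The NDCG half is where you leave a real gap. You stop at $-\log \mathrm{NDCG}_q \le \log\log_2\!\bigl(1+e^{\ell_{ce,q}}\bigr)$ and then assert that averaging gives "mean loss bounds $-\log$ of mean NDCG," but that conclusion needs the additional step $\log\log_2\!\bigl(1+e^{\ell}\bigr) \le \ell$, which you never establish; your suggested fallback via $\log_2(1+r)\le 1+\log_2 r$ produces only a bound on $\log$ of the loss, which is a strictly weaker (and different) statement than the theorem. The missing step is elementary and makes the NDCG case exactly as clean as the MRR case: for an integer rank $r\ge 1$ one has $1+r \le 2^r$, i.e.\ $\log_2(1+r) \le r$, hence $\mathrm{NDCG}_q = 1/\log_2(1+r_q) \ge 1/r_q = \mathrm{RR}_q$ and therefore $-\log \mathrm{NDCG}_q \le -\log \mathrm{RR}_q \le \ell_{ce,q}$; the same Jensen step as for MRR then yields $-\log\bigl(\tfrac{1}{|Q|}\sum_q \mathrm{NDCG}_q\bigr) \le \tfrac{1}{|Q|}\sum_q \ell_{ce,q}$. (Equivalently, $\log_2(1+t)\le t$ for $t=e^{\ell}\ge 1$ closes your original chain.) This is how the cited proof handles NDCG in the binary-relevance, single-positive setting of this paper, and with that one line added your proposal is a correct reconstruction of it.
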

The proof of \Cref{theorem:softmax} can be found in \citet{bruch2019analysis} (Theorems 1-3). It explains the promising performance of SoftmaxCE in MCC tasks. To the best of our knowledge, there is no such bounds for MSE, so the findings in~\cite{hui2021evaluation} may need further theoretical investigation.

On the other hand, there is rich literature of developing ranking losses for optimizing ranking metrics \citep{burges2005learning, burges2006learning, burges2010ranknet}. Such ranking losses are usually directly derived from the target ranking metric, and bound the metric by approximation techniques \citep{qin2010general}. One of the most historical and popularly used ranking losses is the PairLogLoss. 
\begin{equation}
    \ell_{pl}(\mathbf y, \mathbf s) = \sum_{i=1}^n \sum_{j=1}^n \mathbb I_{y_i > y_j}\log(1+e^{-\sigma(s_i - s_j)}), 
\end{equation}
where $\mathbb{I}$ is the indicator and $\sigma$ is a hyper-parameter. The PairLogLoss is proved to be able to minimize the rank of the relevant item \citep{wang2018lambdaloss}. 

Another popular ranking loss is the ApproxNDCG~\citep{qin2010general}, which directly optimizes the NDCG metric in Eq~\ref{eq:ndcg}.
The rank of an item $i$ can be computed as $\pi_s(i) = 1 + \sum_{j\neq i}\mathbb{I}_{s_i < s_j}$, where the indicator $\mathbb{I}_{s<t}$ is discrete but can be approximated by a sigmoid function to be smooth:
\begin{equation}
    \mathbb I_{s<t} = \mathbb I_{t-s>0} \approx \frac{1}{1 + e^{-\alpha(t-s)}},
\end{equation}
where $\alpha>0$ is a parameter to control how tightly the indicator is approximated. 

To summarize, SoftmaxCE can be viewed as a ranking loss that bounds specific ranking metrics. Other ranking losses can also be valuable. We further investigate the empirical use of different ranking losses for MCC in \Cref{sec:exp}.

\subsection{Enhancing Instance-Class Interactions}
\label{sec:interaction}
After obtaining the embeddings $\mathbf{h}$ for the input instance and $\mathbf{c}_i$ for class $i$, it is also important to design the matching mechnism for producing the score $s_i$ between the embeddings of the instance and the class. For this purpose, we can add different interactions between $\mathbf{h}$ and $\mathbf{c}_i$ rather than simple dot-product for producing the score,
$
    s_i = \mathcal I(\mathbf{h}, \mathbf{c}_i)
$. $\mathcal I(\cdot, \cdot)$ is a function to represent the interaction between instance and class embeddings which produces a ranking score. In this paper, we consider the following two patterns as examples to enhance the interaction, but much richer interactions can be deployed based on the LTR formulation:
\begin{itemize}[noitemsep]
    \item LC+MLP: We first apply element-wise multiplication, also known as the latent cross (LC) operation~\citep{beutel2018latentcross} on $\mathbf{h}, \mathbf{c}_i$, and then follow with a multilayer perceptron (MLP) to get the score $s_i$. LC has shown to be a simple and efficient way to generate higher-order matching interactions in DNNs.
    \item Concat+MLP: We first concatenate the two embeddings $\mathbf{h}, \mathbf{c}_i$ and follow with an MLP to compute $s_i$.
\end{itemize}


\begin{table}[t]
\centering
\caption{Statistics of datasets used in experiments.}
\label{tab:datasets}
\begin{tabular}{@{}ccccc@{}}
\toprule
Dataset    & \#classes & Train & Validation & Test  \\ \midrule
GoEmotions & 28        & 36.3K & 4.5K       & 4.6K  \\
MIND       & 18        & 78.8K & 25.7K      & 25.9K \\
ImageNet   & 1000      & 1.28M & 50K        & -     \\
CIFAR-10    & 10        & 50K   & -          & 10K   \\ \bottomrule
\end{tabular}
\vspace{-1em}
\end{table}

\begin{table*}[h!]
\centering
\caption{Results from classical MCC models and Rank4Class evaluated by Top-1/5 Error and NDCG@5. Bold font indicates the best value in each row. Relative Improvement means relative reduction in Top-1/5 Error and relative increase in NDCG@5.}
\label{tab:main-result}
\begin{tabular}{@{}cccccc@{}}
\toprule
Dataset & Encoder & Metrics & Classical MCC & Rank4Class & Relative Improvement \\ \midrule
\multirow{6}{*}{GoEmotions} & \multirow{3}{*}{BERT} & Top-1 Error & 41.00 & \textbf{40.65} & 0.85\% \\
 &  & Top-5 Error & 13.35 & \textbf{11.85} & 11.24\% \\
 &  & NDCG@5 & 73.90 & \textbf{74.62} & 0.97\% \\ \cmidrule(l){2-6} 
 & \multirow{3}{*}{ELECTRA} & Top-1 Error & 38.45 & \textbf{37.36} & 2.83\%\\
 &  & Top-5 Error & 10.45 & \textbf{8.65} & 17.22\% \\
 &  & NDCG@5 & 76.96 & \textbf{78.30} & 1.74\% \\ \midrule
\multirow{6}{*}{MIND} & \multirow{3}{*}{BERT} & Top-1 Error & 30.90 & \textbf{30.20} & 2.27\% \\
 &  & Top-5 Error & 6.15 & \textbf{5.05} & 17.89\%\\
 &  & NDCG@5 & 82.78 & \textbf{83.56} & 0.94\% \\ \cmidrule(l){2-6} 
 & \multirow{3}{*}{ELECTRA} & Top-1 Error & 27.09 & \textbf{26.40} & 2.55\% \\
 &  & Top-5 Error & 4.25 & \textbf{3.70} & 12.94\% \\
 &  & NDCG@5 & 85.73 & \textbf{86.13} & 0.47\% \\ \midrule
\multirow{3}{*}{ImageNet} & \multirow{3}{*}{ResNet50} & Top-1 Error & 23.74 & \textbf{23.58} & 0.67\% \\
 &  & Top-5 Error & 6.90 & \textbf{6.75} & 2.17\% \\
 &  & NDCG@5 & 85.74 & \textbf{85.85} & 0.13\%\\ \midrule
\multirow{3}{*}{CIFAR-10} & \multirow{3}{*}{VGG16} & Top-1 Error & 6.56 & \textbf{6.40} & 2.44\% \\
 &  & Top-5 Error & 0.20 & \textbf{0.15} & 25.00\% \\
 &  & NDCG@5 & 97.19 & \textbf{97.23} & 0.04\% \\
\bottomrule
\end{tabular}
\end{table*}

\section{Experiments}
\label{sec:exp}

We study Rank4Class on both text classification and image classification tasks in comparison to classical MCC models. We consider several datasets and instance encoders for evaluation. The datasets are summarized in Table \ref{tab:datasets}. For text classification, we include two recent large-scale datasets, GoEmotions \citep{demszky2020goemotions} and MIND \citep{wu2020mind}. The GoEmotions dataset contains instances with multiple labels, which are filtered out, since we focus on single-label MCC tasks. 
We adopt ELECTRA \citep{clark2020electra} and BERT \citep{bert} as text encoders in the experiments. For image classification, we use the popular ImageNet \citep{ILSVRC15} and CIFAR-10 \citep{krizhevsky2009cifar10}. We adopt ResNet50 \citep{he2016resnet} as image encoder for ImageNet and VGG16 \citep{simonyan2014very} for CIFAR-10. More details of the datasets and instance encoders are given in \Cref{appendix:implementation}. 

For text datasets with both validation and test sets provided, we tune hyper-parameters on the validation set and report results on the test set. For ImageNet/CIFAR-10, we tune hyper-parameters and report the performance on the validation/test set respectively, which is the norm in the literature. 
More details on experimental settings such as data processing and hyper-parameter tuning are included in \Cref{appendix:protocol}. We use Top-1 Error (equivalent to $1-$ NDCG@1, lower the better), Top-5 Error, and NDCG@5 (higher the better) for evaluation. All result points are multiplied by $100$ for better illustration as commonly done in the literature.


In \Cref{sec:overall-results}, we summarize the overall MCC performance of Rank4Class with respect to different configurations of loss functions and interaction patterns. In Section \ref{sec:exp-loss}, we study different ranking losses in optimizing MCC tasks with respect to different evaluation metrics. In Section \ref{sec:exp-structure}, we examine the effect of different interaction patterns between the instance and class embeddings. We discuss the effectiveness of ranking metrics in MCC evaluation in both sections. 

\begin{table*}[t]
\centering
\caption{Results on text classification tasks trained with different losses.}
\label{tab:text-loss}
\begin{tabular}{@{}cccccc@{}}
\toprule
Dataset & Encoder & Metrics & SoftmaxCE & PairLogLoss & ApproxNDCG \\ \midrule
\multirow{6}{*}{GoEmotions} & \multirow{3}{*}{BERT} & Top-1 Error & \textbf{41.00} & 41.79 & 41.42 \\
 &  & Top-5 Error & 13.35 & \textbf{12.00} & 13.00 \\
 &  & NDCG@5 & 73.90 & \textbf{74.27} & 74.02 \\ \cmidrule(l){2-6} 
 & \multirow{3}{*}{ELECTRA} & Top-1 Error & 38.45 & 39.78 & \textbf{37.67} \\
 &  & Top-5 Error & 10.45 & \textbf{8.80} & 9.25 \\
 &  & NDCG@5 & 76.96 & 77.17 & \textbf{78.03} \\ \midrule
\multirow{6}{*}{MIND} & \multirow{3}{*}{BERT} & Top-1 Error & \textbf{30.90} & 31.21 & 30.97 \\
 &  & Top-5 Error & 6.15 & \textbf{5.25} & \textbf{5.25} \\
 &  & NDCG@5 & 82.78 & 83.25 & \textbf{83.36} \\ \cmidrule(l){2-6} 
 & \multirow{3}{*}{ELECTRA} & Top-1 Error & 27.09 & 27.72 & \textbf{26.65} \\
 &  & Top-5 Error & 4.25 & \textbf{3.75} & 4.30 \\
 &  & NDCG@5 & 85.73 & 85.74 & \textbf{85.89} \\ 
\bottomrule
\end{tabular}
\end{table*}

\begin{table*}[t]
\centering
\caption{Results on image classification tasks trained with different losses.}
\label{tab:image-loss}
\begin{tabular}{cccccc}
\toprule
Dataset & Encoder & Metrics & SoftmaxCE & PairLogLoss & ApproxNDCG \\ \midrule
\multirow{3}{*}{ImageNet} & \multirow{3}{*}{ResNet50} & Top-1 Error & 23.74 & 23.64 & \textbf{23.62} \\
 &  & Top-5 Error & 6.90 & \textbf{6.80} & 6.85 \\
 &  & NDCG@5 & 85.74 & \textbf{85.82} & 85.80 \\ \midrule
\multirow{3}{*}{CIFAR-10} & \multirow{3}{*}{VGG16} & Top-1 Error & 6.56 & 6.43 & \textbf{6.41} \\
 &  & Top-5 Error & 0.20 & \textbf{0.15} & \textbf{0.15} \\
 &  & NDCG@5 & 97.19 & 97.21 & \textbf{97.23} \\ \bottomrule
\end{tabular}
\end{table*}

\subsection{Overall Performance of Rank4Class}
\label{sec:overall-results}
Besides SoftmaxCE, PairLogLoss, and ApproxNDCG discussed in \Cref{sec:ranking-loss}, we also include Gumbel-ApproxNDCG~\citep{bruch2020stochastic} loss and MSE~\citep{hui2021evaluation} in experiments. We study different combinations between these five losses and the two interaction patterns introduced in \Cref{sec:interaction} in the Rank4Class framework. In \Cref{tab:main-result}, we report the best performance of different configurations for Rank4Class under each metric in comparison to the baseline MCC models. The complete results of all combinations are included in \Cref{appendix:results}, where we use $*$ to mark the combinations that achieve the best performance in each task under each metric. As shown in the table, Rank4Class can improve the MCC performance in virtually all tasks through specific combinations of losses and interaction patterns. This shows the increased capacity from Rank4Class in MCC tasks evaluated on different metrics, which is achieved by adding more flexible design options in different components from LTR perspectives.

\subsection{Effect of Ranking Losses}
\label{sec:exp-loss}
In this section, we study the use of ranking losses for optimizing MCC performance. In particular, we use PairLogLoss and ApproxNDCG as two most representative ranking losses in comparison to the SoftmaxCE. The results on other losses can be find in \Cref{appendix:results}. We only vary the loss function in the base Rank4Class structure in Figure \ref{fig:equivalence} (Middle), so the ``SoftmaxCE'' method is the baseline that is equivalent to classical MCC models.  

The results are shown in Table \ref{tab:text-loss} and \ref{tab:image-loss} for text and image classification tasks respectively. Overall, PairLogLoss or ApproxNDCG can outperform SoftmaxCE in nearly all tasks and metrics except for the Top-1 Error on GoEmotions and MIND datasets with BERT model. Besides, PairLogLoss is generally good at reducing Top-5 Error than SoftmaxCE and ApproxNDCG, achieving the lowest Top-5 Error in all tasks. ApproxNDCG performs well on both Top-1 Error and NDCG@5 (achieves the top in four out of six tasks), which shows its effectiveness in directly optimizing NDCG metrics. Moreover, we see that the improvements from PairLogLoss and ApproxNDCG are more significant on text classification tasks than that on image classification tasks. Our observation is similar to that in~\citep{hui2021evaluation}, which hypothesized the reason being that structures and parameters of popular image encoders are all heavily tuned with the SoftmaxCE.

Finally, we observe that different metrics are not always consistent in evaluating MCC tasks. For example, on GoEmotions with ELECTRA, PairLogLoss performs better than ApproxNDCG on Top-5 Error, while ApproxNDCG outperforms PairLogLoss on NDCG@5. This means that the PairLogLoss tends to rank the correct class in top 5 positions in more instances than ApproxNDCG, but ApproxNDCG can put the correct class relatively higher in the rankings. Furthermore, on MIND with BERT, Top-5 Error can not tell if PairLogLoss or ApproxNDCG is better. But NDCG@5 can successfully differentiate them since it also takes the absolute rank of the correct class in top 5 positions into account and thus is more informative. 


\begin{table*}[t]
\centering
\caption{Results on text classification tasks of different interactions trained with ApproxNDCG.}
\label{tab:text-structure}
\begin{tabular}{@{}cccccc@{}}
\toprule
Dataset & Encoder & Metrics & \begin{tabular}[c]{@{}c@{}}dot product\end{tabular} & LC+MLP & Concat+MLP \\ \midrule
\multirow{6}{*}{GoEmotions} & \multirow{3}{*}{BERT} & Top-1 Error & 41.42 & \textbf{40.65} & 40.92 \\
 &  & Top-5 Error & 13.00 & 12.40 & \textbf{12.20} \\
 &  & NDCG@5 & 74.02 & \textbf{74.62} & 74.61 \\ \cmidrule(l){2-6} 
 & \multirow{3}{*}{ELECTRA} & Top-1 Error & \textbf{37.67} & \textbf{37.67} & 37.80 \\
 &  & Top-5 Error & 9.25 & \textbf{8.65} & 9.15 \\
 &  & NDCG@5 & 78.03 & \textbf{78.30} & 77.97 \\ \midrule
\multirow{6}{*}{MIND} & \multirow{3}{*}{BERT} & Top-1 Error & 30.97 & 30.81 & \textbf{30.77} \\
 &  & Top-5 Error & 5.25 & 5.25 & \textbf{5.05} \\
 &  & NDCG@5 & 83.36 & \textbf{83.43} & 83.35 \\ \cmidrule(l){2-6} 
 & \multirow{3}{*}{ELECTRA} & Top-1 Error & 26.65 & 26.74 & \textbf{26.40} \\
 &  & Top-5 Error & 4.30 & \textbf{3.70} & 4.30 \\
 &  & NDCG@5 & 85.89 & \textbf{86.13} & 86.05 \\ 
\bottomrule
\end{tabular}
\end{table*}

\begin{table*}[t]
\centering
\caption{Results on image classification tasks of different interactions trained with SoftmaxCE.}
\label{tab:image-interaction}
\begin{tabular}{cccccc}
\toprule
Dataset & Encoder & Metrics & dot product & LC+MLP & Concat+MLP \\ \midrule
\multirow{3}{*}{ImageNet} & \multirow{3}{*}{ResNet50} & Top-1 Error & 23.74 & \textbf{23.62} & 23.66 \\
 &  & Top-5 Error & 6.90 & 6.80 & \textbf{6.75} \\
 &  & NDCG@5 & 85.74 & \textbf{85.84} & 85.83 \\ \midrule
\multirow{3}{*}{CIFAR-10} & \multirow{3}{*}{VGG16} & Top-1 Error & 6.56 & \textbf{6.40} & 6.43 \\
 &  & Top-5 Error & \textbf{0.20} & \textbf{0.20} & \textbf{0.20} \\
 &  & NDCG@5 & 97.19 & 97.22 & \textbf{97.23} \\ \bottomrule
\end{tabular}
\end{table*}

\subsection{Effect of Instance and Class  Interactions}
\label{sec:exp-structure}
In this section, we study the effect of different interaction patterns between instance and class embeddings for producing ranking scores. Specifically, we use the two interaction patterns in \cref{sec:interaction} with simple 2-layer MLPs. 

We compare these two types of interactions with the dot-product baseline between the instance and class embeddings by fixing the loss function. In particular, we use ApproxNDCG in text classification tasks and SoftmaxCE in image classification tasks. The study of different interactions on other losses are included in \Cref{appendix:results}. The results are shown in Table \ref{tab:text-structure} and \ref{tab:image-interaction} for text and image classification tasks. The results demonstrate that the two added interactions can outperform or achieve competitive performance than simple dot-product in all tasks and metrics. This shows the effectiveness of adding enhanced interactions based on the Rank4Class framework. In particular, latent-cross embedding tends to perform better than the concatenation of instance and class embeddings, achieving top performance in four out of six on Top-1 Error and five out of six tasks on NDCG@5. Again, we see that different evaluation metrics are not always consistent with each other, and NDCG@5 can be more informative than Top-5 Error, as observed and discussed in Section \ref{sec:exp-loss}. 

\section{Related Work}
\label{sec:relatedwork}
Modern deep neural networks for MCC converge to the same recipe: given an input instance, a neural model is learned to output scores for a set of classes. The vast research literature focus on developing more effective representation learning encoders in diverse domains, such as computer vision~\citep{he2016resnet, krizhevsky2012imagenet, densenet, dosovitskiy2020vit, 50333}, natural language processing~\citep{sun2020finetune, tay2021pre, minaee2021deep}, and automatic speech recognition~\citep{moritz2019triggered}, among others.
The softmax cross entropy loss is virtually the dominant loss function discussed in these papers. Only very recently, \citet{hui2021evaluation} studied the mean squared error as an alternative loss for training MCC models. Our work is orthogonal to the extensive research on neural encoders in that we provide a new formulation from the LTR perspective. Such a perspective inspires more diverse loss functions and flexible model architectures that can model interactions between inputs and classes more effectively. 

Learning to rank (LTR) is a long-established research area at the intersection of machine learning and information retrieval~\citep{liu2009learning}. Neural ranking models are dominant in ranking virtually all modalities recently, including text ranking~\citep{jimmylinbook}, image retrieval~\citep{gordo2016deep}, and tabular data ranking~\citep{dasalc}. Many LTR papers focus on more effective loss functions~\citep{qin2010general, bruch2020stochastic} to rank items with respect to a query. One of the purposes of this paper is to introduce new techniques stemming from LTR to solve MCC problems. 



Multi-label classification (MLC) differs from MCC in that there are more than one labels for each instance. MLC is generally treated as a different problem from classical MCC: the number of labels assigned to an instance could be arbitrary and one research focus is to decide the threshold to cutoff the prediction list. Its recently popular sub-field, extreme multi-label classification~\citep{zhang2018deep}, also has a different and specific setting (e.g., the number of labels is huge) and research focus (e.g., efficiency to rank in the huge label space). LTR techniques have been proposed for MLC in the past~\citep{yang2012multilabel}. While the perspective is similar to ours, the work is not in the deep learning setting and thus does not have the equivalence view of MCC in LTR. Our work makes the connection and also proposes extensions for MCC in the deep learning setting.

\section{Conclusion}
\label{sec:conclusion}
In this paper we examine the classical MCC problem through the lens of ranking. Such a perspective brings benefits to MCC from three aspects: ranking metrics, ranking losses, and ranking architectures. We first show that ranking metrics can be more informative for MCC evaluations. Then, in the deep learning setting, we show an equivalent view of MCC models in LTR setting. Such a generalization provides new perspectives for MCC with respect to loss functions and model architectures. We study these new formulations of MCC on various datasets and observe consistent performance improvements.

Our work opens up several research directions. First, the new ranking architectures allow to take more class information such as class metadata into account and it is interesting to study how this additional information can improve MCC. Second, it is also possible to apply the proposed Rank4Class framework to binary classification. Third, classes are usually not independent, and our framework makes it possible to incorporate the relationship between classes into MCC through attention or graph-based mechanisms.

\bibliography{reference}

\begin{thebibliography}{49}
\providecommand{\natexlab}[1]{#1}
\providecommand{\url}[1]{\texttt{#1}}
\expandafter\ifx\csname urlstyle\endcsname\relax
  \providecommand{\doi}[1]{doi: #1}\else
  \providecommand{\doi}{doi: \begingroup \urlstyle{rm}\Url}\fi

\bibitem[Agrawal et~al.(2013)Agrawal, Gupta, Prabhu, and
  Varma]{agrawal2013multilabel}
Agrawal, R., Gupta, A., Prabhu, Y., and Varma, M.
\newblock Multi-label learning with millions of labels: Recommending advertiser
  bid phrases for web pages.
\newblock In \emph{Proceedings of the 22nd International Conference on World
  Wide Web}, WWW '13, New York, NY, USA, 2013. Association for Computing
  Machinery.
\newblock ISBN 9781450320351.

\bibitem[Aly(2005)]{aly05survey}
Aly, M.
\newblock Survey on multiclass classification methods.
\newblock \emph{Neural Netw}, 19:\penalty0 1--9, 2005.

\bibitem[Azarbonyad et~al.(2021)Azarbonyad, Dehghani, Marx, and
  Kamps]{azarbonyad2021learning}
Azarbonyad, H., Dehghani, M., Marx, M., and Kamps, J.
\newblock Learning to rank for multi-label text classification: Combining
  different sources of information.
\newblock \emph{Natural Language Engineering}, 2021.

\bibitem[Beutel et~al.(2018)Beutel, Covington, Jain, Xu, Li, Gatto, and
  Chi]{beutel2018latentcross}
Beutel, A., Covington, P., Jain, S., Xu, C., Li, J., Gatto, V., and Chi, E.~H.
\newblock Latent cross: Making use of context in recurrent recommender systems.
\newblock In \emph{Proceedings of the Eleventh ACM International Conference on
  Web Search and Data Mining}, 2018.

\bibitem[Bruch et~al.(2019)Bruch, Wang, Bendersky, and
  Najork]{bruch2019analysis}
Bruch, S., Wang, X., Bendersky, M., and Najork, M.
\newblock An analysis of the softmax cross entropy loss for learning-to-rank
  with binary relevance.
\newblock In \emph{Proceedings of the 2019 ACM SIGIR International Conference
  on Theory of Information Retrieval}, pp.\  75–78, 2019.

\bibitem[Bruch et~al.(2020)Bruch, Han, Bendersky, and
  Najork]{bruch2020stochastic}
Bruch, S., Han, S., Bendersky, M., and Najork, M.
\newblock A stochastic treatment of learning to rank scoring functions.
\newblock In \emph{Proceedings of the 13th International Conference on Web
  Search and Data Mining}, pp.\  61--69, 2020.

\bibitem[Burges et~al.(2005)Burges, Shaked, Renshaw, Lazier, Deeds, Hamilton,
  and Hullender]{burges2005learning}
Burges, C., Shaked, T., Renshaw, E., Lazier, A., Deeds, M., Hamilton, N., and
  Hullender, G.
\newblock Learning to rank using gradient descent.
\newblock In \emph{Proceedings of the 22nd International Conference on Machine
  Learning}, pp.\  89–96, 2005.

\bibitem[Burges et~al.(2006)Burges, Ragno, and Le]{burges2006learning}
Burges, C., Ragno, R., and Le, Q.
\newblock Learning to rank with nonsmooth cost functions.
\newblock In \emph{Advances in Neural Information Processing Systems}, 2006.

\bibitem[Burges(2010)]{burges2010ranknet}
Burges, C. J.~C.
\newblock From {RankNet} to {LambdaRank} to {LambdaMART}: An overview.
\newblock Technical report, Microsoft Research, 2010.

\bibitem[Clark et~al.(2020)Clark, Luong, Le, and Manning]{clark2020electra}
Clark, K., Luong, M.-T., Le, Q.~V., and Manning, C.~D.
\newblock {ELECTRA}: Pre-training text encoders as discriminators rather than
  generators.
\newblock In \emph{International Conference on Learning Representations}, 2020.

\bibitem[Crammer \& Singer(2002)Crammer and Singer]{crammer2002svm}
Crammer, K. and Singer, Y.
\newblock On the algorithmic implementation of multiclass kernel-based vector
  machines.
\newblock \emph{J. Mach. Learn. Res.}, 2:\penalty0 265–292, March 2002.

\bibitem[Demszky et~al.(2020)Demszky, Movshovitz-Attias, Ko, Cowen, Nemade, and
  Ravi]{demszky2020goemotions}
Demszky, D., Movshovitz-Attias, D., Ko, J., Cowen, A., Nemade, G., and Ravi, S.
\newblock {GoEmotions: A Dataset of Fine-Grained Emotions}.
\newblock In \emph{58th Annual Meeting of the Association for Computational
  Linguistics (ACL)}, 2020.

\bibitem[Devlin et~al.(2019)Devlin, Chang, Lee, and Toutanova]{bert}
Devlin, J., Chang, M.-W., Lee, K., and Toutanova, K.
\newblock {BERT}: Pre-training of deep bidirectional transformers for language
  understanding.
\newblock In \emph{Proceedings of the 2019 Conference of the North {A}merican
  Chapter of the Association for Computational Linguistics: Human Language
  Technologies, Volume 1 (Long and Short Papers)}, pp.\  4171--4186, 2019.

\bibitem[Dosovitskiy et~al.(2021)Dosovitskiy, Beyer, Kolesnikov, Weissenborn,
  Zhai, Unterthiner, Dehghani, Minderer, Heigold, Gelly, Uszkoreit, and
  Houlsby]{dosovitskiy2020vit}
Dosovitskiy, A., Beyer, L., Kolesnikov, A., Weissenborn, D., Zhai, X.,
  Unterthiner, T., Dehghani, M., Minderer, M., Heigold, G., Gelly, S.,
  Uszkoreit, J., and Houlsby, N.
\newblock An image is worth 16x16 words: Transformers for image recognition at
  scale.
\newblock \emph{International Conference on Learning Representations}, 2021.

\bibitem[Duchi et~al.(2011)Duchi, Hazan, and Singer]{john2011adaptive}
Duchi, J., Hazan, E., and Singer, Y.
\newblock Adaptive subgradient methods for online learning and stochastic
  optimization.
\newblock \emph{Journal of Machine Learning Research}, 12\penalty0
  (61):\penalty0 2121--2159, 2011.

\bibitem[Goodfellow et~al.(2016)Goodfellow, Bengio, and
  Courville]{goodfellow2016deep}
Goodfellow, I., Bengio, Y., and Courville, A.
\newblock \emph{Deep Learning}.
\newblock MIT Press, 2016.

\bibitem[Gordo et~al.(2016)Gordo, Almaz{\'a}n, Revaud, and
  Larlus]{gordo2016deep}
Gordo, A., Almaz{\'a}n, J., Revaud, J., and Larlus, D.
\newblock Deep image retrieval: Learning global representations for image
  search.
\newblock In \emph{European Conference on Computer Vision}, pp.\  241--257.
  Springer, 2016.

\bibitem[Guo et~al.(2020)Guo, Fan, Pang, Yang, Ai, Zamani, Wu, Croft, and
  Cheng]{guo2019deep}
Guo, J., Fan, Y., Pang, L., Yang, L., Ai, Q., Zamani, H., Wu, C., Croft, W.~B.,
  and Cheng, X.
\newblock A deep look into neural ranking models for information retrieval.
\newblock \emph{Information Processing \& Management}, 57\penalty0
  (6):\penalty0 102067, 2020.

\bibitem[Hastie et~al.(2001)Hastie, Tibshirani, and
  Friedman]{hastie01statisticallearning}
Hastie, T., Tibshirani, R., and Friedman, J.
\newblock \emph{The Elements of Statistical Learning}.
\newblock 2001.

\bibitem[He et~al.(2016)He, Zhang, Ren, and Sun]{he2016resnet}
He, K., Zhang, X., Ren, S., and Sun, J.
\newblock Deep residual learning for image recognition.
\newblock In \emph{2016 IEEE Conference on Computer Vision and Pattern
  Recognition (CVPR)}, pp.\  770--778, 2016.

\bibitem[Huang et~al.(2017)Huang, Liu, Van Der~Maaten, and
  Weinberger]{densenet}
Huang, G., Liu, Z., Van Der~Maaten, L., and Weinberger, K.~Q.
\newblock Densely connected convolutional networks.
\newblock In \emph{2017 IEEE Conference on Computer Vision and Pattern
  Recognition (CVPR)}, pp.\  2261--2269, 2017.

\bibitem[Hui \& Belkin(2021)Hui and Belkin]{hui2021evaluation}
Hui, L. and Belkin, M.
\newblock Evaluation of neural architectures trained with square loss vs cross
  entropy in classification tasks.
\newblock In \emph{International Conference on Learning Representations}, 2021.

\bibitem[Jain et~al.(2019)Jain, Balasubramanian, Chunduri, and
  Varma]{jain2019slice}
Jain, H., Balasubramanian, V., Chunduri, B., and Varma, M.
\newblock Slice: Scalable linear extreme classifiers trained on 100 million
  labels for related searches.
\newblock In \emph{WSDM ’19, February 11–15, 2019, Melbourne, VIC,
  Australia}. ACM, February 2019.

\bibitem[J\"{a}rvelin \& Kek\"{a}l\"{a}inen(2002)J\"{a}rvelin and
  Kek\"{a}l\"{a}inen]{ndcg2002paper}
J\"{a}rvelin, K. and Kek\"{a}l\"{a}inen, J.
\newblock Cumulated gain-based evaluation of ir techniques.
\newblock \emph{ACM Trans. Inf. Syst.}, pp.\  422–446, October 2002.

\bibitem[Kingma \& Ba(2014)Kingma and Ba]{kingma2017adam}
Kingma, D.~P. and Ba, J.
\newblock Adam: A method for stochastic optimization.
\newblock \emph{arXiv preprint arXiv:1412.6980}, 2014.

\bibitem[Krizhevsky \& Hinton(2009)Krizhevsky and
  Hinton]{krizhevsky2009cifar10}
Krizhevsky, A. and Hinton, G.
\newblock Learning multiple layers of features from tiny images.
\newblock Technical report, University of Toronto, Toronto, Ontario, 2009.

\bibitem[Krizhevsky et~al.(2012)Krizhevsky, Sutskever, and
  Hinton]{krizhevsky2012imagenet}
Krizhevsky, A., Sutskever, I., and Hinton, G.~E.
\newblock Imagenet classification with deep convolutional neural networks.
\newblock In \emph{Proceedings of the 25th International Conference on Neural
  Information Processing Systems}, pp.\  1097–1105, 2012.

\bibitem[Li \& Xu(2014)Li and Xu]{li2014semantic}
Li, H. and Xu, J.
\newblock Semantic matching in search.
\newblock \emph{Foundations and Trends in Information Retrieval}, 7\penalty0
  (5):\penalty0 343--469, 2014.

\bibitem[Lin et~al.(2020)Lin, Nogueira, and Yates]{jimmylinbook}
Lin, J., Nogueira, R., and Yates, A.
\newblock Pretrained transformers for text ranking: {BERT} and beyond.
\newblock \emph{arXiv preprint arXiv:2010.06467}, 2020.

\bibitem[Liu(2009)]{liu2009learning}
Liu, T.-Y.
\newblock Learning to rank for information retrieval.
\newblock \emph{Found. Trends Inf. Retr.}, 3\penalty0 (3):\penalty0 225–331,
  March 2009.

\bibitem[Minaee et~al.(2021)Minaee, Kalchbrenner, Cambria, Nikzad, Chenaghlu,
  and Gao]{minaee2021deep}
Minaee, S., Kalchbrenner, N., Cambria, E., Nikzad, N., Chenaghlu, M., and Gao,
  J.
\newblock Deep learning--based text classification: A comprehensive review.
\newblock \emph{ACM Computing Surveys (CSUR)}, 54\penalty0 (3):\penalty0 1--40,
  2021.

\bibitem[Moritz et~al.(2019)Moritz, Hori, and Le~Roux]{moritz2019triggered}
Moritz, N., Hori, T., and Le~Roux, J.
\newblock Triggered attention for end-to-end speech recognition.
\newblock In \emph{IEEE International Conference on Acoustics, Speech and
  Signal Processing (ICASSP)}, pp.\  5666--5670, 2019.

\bibitem[Pasumarthi et~al.(2019)Pasumarthi, Bruch, Wang, Li, Bendersky, Najork,
  Pfeifer, Golbandi, Anil, and Wolf]{rama2019tfrank}
Pasumarthi, R.~K., Bruch, S., Wang, X., Li, C., Bendersky, M., Najork, M.,
  Pfeifer, J., Golbandi, N., Anil, R., and Wolf, S.
\newblock Tf-ranking: Scalable tensorflow library for learning-to-rank.
\newblock In \emph{Proceedings of the 25th ACM SIGKDD International Conference
  on Knowledge Discovery and Data Mining}, 2019.

\bibitem[Qin et~al.(2010)Qin, Liu, and Li]{qin2010general}
Qin, T., Liu, T.-Y., and Li, H.
\newblock A general approximation framework for direct optimization of
  information retrieval measures.
\newblock \emph{Information Retrieval}, 13\penalty0 (4):\penalty0 375--397,
  2010.

\bibitem[Qin et~al.(2021)Qin, Yan, Zhuang, Tay, Pasumarthi, Wang, Bendersky,
  and Najork]{dasalc}
Qin, Z., Yan, L., Zhuang, H., Tay, Y., Pasumarthi, R.~K., Wang, X., Bendersky,
  M., and Najork, M.
\newblock Are neural rankers still outperformed by gradient boosted decision
  trees?
\newblock In \emph{Proceedings of the 9th International Conference on Learning
  Representations}, 2021.

\bibitem[Russakovsky et~al.(2015)Russakovsky, Deng, Su, Krause, Satheesh, Ma,
  Huang, Karpathy, Khosla, Bernstein, Berg, and Fei-Fei]{ILSVRC15}
Russakovsky, O., Deng, J., Su, H., Krause, J., Satheesh, S., Ma, S., Huang, Z.,
  Karpathy, A., Khosla, A., Bernstein, M., Berg, A.~C., and Fei-Fei, L.
\newblock {ImageNet Large Scale Visual Recognition Challenge}.
\newblock \emph{International Journal of Computer Vision (IJCV)}, 115\penalty0
  (3):\penalty0 211--252, 2015.

\bibitem[Simonyan \& Zisserman(2014)Simonyan and Zisserman]{simonyan2014very}
Simonyan, K. and Zisserman, A.
\newblock {Very deep convolutional networks for large-scale image recognition}.
\newblock \emph{arXiv preprint arXiv:1409.1556}, 2014.

\bibitem[Sokolova \& Lapalme(2009)Sokolova and Lapalme]{sokolova2009metric}
Sokolova, M. and Lapalme, G.
\newblock A systematic analysis of performance measures for classification
  tasks.
\newblock \emph{Information Processing and Management}, 45\penalty0
  (4):\penalty0 427--437, 2009.

\bibitem[Sun et~al.(2019)Sun, Qiu, Xu, and Huang]{sun2020finetune}
Sun, C., Qiu, X., Xu, Y., and Huang, X.
\newblock How to fine-tune bert for text classification?
\newblock In \emph{China National Conference on Chinese Computational
  Linguistics}, pp.\  194--206. Springer, 2019.

\bibitem[Tay et~al.(2021{\natexlab{a}})Tay, Dehghani, Aribandi, Gupta, Pham,
  Qin, Bahri, Juan, and Metzler]{50333}
Tay, Y., Dehghani, M., Aribandi, V.~K., Gupta, J.~P., Pham, P., Qin, Z., Bahri,
  D., Juan, D.-C., and Metzler, D.
\newblock Omninet: Omnidirectional representations from transformers.
\newblock In \emph{International Conference on Machine Learning},
  2021{\natexlab{a}}.

\bibitem[Tay et~al.(2021{\natexlab{b}})Tay, Dehghani, Gupta, Bahri, Aribandi,
  Qin, and Metzler]{tay2021pre}
Tay, Y., Dehghani, M., Gupta, J., Bahri, D., Aribandi, V., Qin, Z., and
  Metzler, D.
\newblock Are pre-trained convolutions better than pre-trained transformers?
\newblock \emph{arXiv preprint arXiv:2105.03322}, 2021{\natexlab{b}}.

\bibitem[Vaswani et~al.(2017)Vaswani, Shazeer, Parmar, Uszkoreit, Jones, Gomez,
  Kaiser, and Polosukhin]{vaswani2017attention}
Vaswani, A., Shazeer, N., Parmar, N., Uszkoreit, J., Jones, L., Gomez, A.~N.,
  Kaiser, L., and Polosukhin, I.
\newblock Attention is all you need.
\newblock In \emph{Advances in Neural Information Processing Systems}, 2017.

\bibitem[Wang et~al.(2018)Wang, Li, Golbandi, Bendersky, and
  Najork]{wang2018lambdaloss}
Wang, X., Li, C., Golbandi, N., Bendersky, M., and Najork, M.
\newblock The lambdaloss framework for ranking metric optimization.
\newblock In \emph{Proceedings of the 27th ACM International Conference on
  Information and Knowledge Management}, pp.\  1313–1322, 2018.

\bibitem[Weston et~al.(2011)Weston, Bengio, and Usunier]{weston2011wsabie}
Weston, J., Bengio, S., and Usunier, N.
\newblock Wsabie: Scaling up to large vocabulary image annotation.
\newblock In \emph{Proceedings of the Twenty-Second International Joint
  Conference on Artificial Intelligence - Volume Volume Three}, IJCAI'11. AAAI
  Press, 2011.

\bibitem[Weston et~al.(2013)Weston, Makadia, and Yee]{weston2013label}
Weston, J., Makadia, A., and Yee, H.
\newblock Label partitioning for sublinear ranking.
\newblock In \emph{Proceedings of the 30th International Conference on Machine
  Learning}, Proceedings of Machine Learning Research, Atlanta, Georgia, USA,
  2013. PMLR.

\bibitem[Wu et~al.(2020)Wu, Qiao, Chen, Wu, Qi, Lian, Liu, Xie, Gao, Wu, and
  Zhou]{wu2020mind}
Wu, F., Qiao, Y., Chen, J.-H., Wu, C., Qi, T., Lian, J., Liu, D., Xie, X., Gao,
  J., Wu, W., and Zhou, M.
\newblock {MIND}: A large-scale dataset for news recommendation.
\newblock In \emph{Proceedings of the 58th Annual Meeting of the Association
  for Computational Linguistics}, 2020.

\bibitem[Yang \& Gopal(2012)Yang and Gopal]{yang2012multilabel}
Yang, Y. and Gopal, S.
\newblock Multilabel classification with meta-level features in a
  learning-to-rank framework.
\newblock \emph{Machine Learning}, 88\penalty0 (1-2):\penalty0 47--68, 2012.

\bibitem[Zhang et~al.(2021)Zhang, Lipton, Li, and Smola]{zhang2021dive}
Zhang, A., Lipton, Z.~C., Li, M., and Smola, A.~J.
\newblock Dive into deep learning.
\newblock \emph{arXiv preprint arXiv:2106.11342}, 2021.

\bibitem[Zhang et~al.(2018)Zhang, Yan, Wang, and Zha]{zhang2018deep}
Zhang, W., Yan, J., Wang, X., and Zha, H.
\newblock Deep extreme multi-label learning.
\newblock In \emph{Proceedings of the 2018 ACM on International Conference on
  Multimedia Retrieval}, pp.\  100--107, 2018.

\end{thebibliography}
\bibliographystyle{icml2022}

\newpage
\appendix
\onecolumn
\section{Proof of \Cref{theorem:ndcgvsacc}}
\label{appendix:proof}
\ndcgvsacc*
\begin{proof}
Consider NDCG@$K$ and Top-$K$ Accuracy as functions of any input class ranking $\pi\in\Pi$, where $\Pi$ is the space of all possible rankings (of size n!). Let $P$ be a distribution over $\Pi$ from which the evaluation set is sampled. Let the probability of the correct class being at position $i$ in the evaluation set be $p_i, p_i \geq 0, i\in[n]$, and $\sum_{i=1}^n p_i = 1$. Thus $\mathbb E[p_i] = \sum_{\mathbf y[\pi(i)]=1} P(\pi)$, where $P(\pi)$ is the probability that a class ranking $\pi$ being sampled from $\Pi$. 

Note that NDCG@$K$ has $K$ distinct values corresponding to the correct class being ranked at the top $K$ positions plus $0$ for not in top $K$ positions. Therefore, the entropy of the values of NDCG@$K$ can be obtained as:
\begin{equation}
    H(\textrm{NDCG@$K$}) = -\sum_{i=1}^K p_i \log p_i - p_{-} \log p_{-}, 
\end{equation}
where $p_- = \sum_{i=K+1}^n$ is the probability that the correct class is not ranked in top $K$ positions. $p_{-} \log p_{-}=0$ when $p_{-}=0$ as a convention in information theory. 

However, Top-$K$ Accuracy only has two possible values: $1$ when the correct class is in top $K$ positions, and $0$ otherwise. Hence the entropy of Top-$K$ Accuracy values is
\begin{equation}
    H(\textrm{acc@$K$}) = - p_{+} \log p_{+} - p_{-} \log p_{-},
\end{equation}
where $p_+ = \sum_{i=1}^K$ is the probability that the correct class is ranked in top $K$ positions. 

Finally, we have 
\begin{align}
    &H(\textrm{NDCG@$K$}) - H(\textrm{acc@$K$}) \\\nonumber
    =& -\sum_{i=1}^K p_i \log p_i + p_{+} \log p_{+} \\\nonumber
    =& \sum_{i=1}^K p_i \big(\log p_+ - \log p_i\big) \geq 0.
\end{align}
Therefore, NDCG@$K$ has more information than Top-$K$ Accuracy in terms of the entropy of the evaluation results.
\end{proof}

\section{Datasets and Encoders}
\label{appendix:implementation}
\subsection{Datasets}
\begin{itemize}
    \item \textbf{GoEmotions} \citep{demszky2020goemotions} is the largest manually annotated dataset for fine-grained emotion classification. It contains 58k English Reddit comments, labeled with 27 emotion categories and a default ``Neutral" category. On top of the raw data, the authors also provided a version that only includes comments with two or more raters agreeing on at least one label, split into train/test/validation sets. In our experiments, we filter out comments with more than 1 emotion labels in all three sets. 
    \item \textbf{MIND} \citep{wu2020mind} is a large-scale dataset for news recommendation. MIND contains about 130k English news articles. Every news article contains rich textual content including title, abstract, body, category and entities. We use the concatenation of title and abstract as the content of the news and use the category of the news as the classification label. We split all news instances into train/test/validation sets by roughly 60/20/20 for experiments.
    \item \textbf{ImageNet} \citep{krizhevsky2012imagenet} is an image dataset with around 1.28 million images in the training set and 50k images in the validation set. Each image is labeled by one of 1,000 classes. The images are cropped and resized to $224\times 224 \times 3$ pixels as the input following the preprocessing in \url{https://github.com/tensorflow/models/tree/master/official/vision/image_classification/resnet}. We follow the common practices of using ImageNet and report the results on the validation set.
    \item \textbf{CIFAR-10} \citep{krizhevsky2009cifar10} contains 50k training images and 10k test images. There are 10 classes and each class has 6k images. All images have the same size of $32\times 32\times 3$. 
\end{itemize}

\subsection{Instance Encoders}
\begin{itemize}
    \item \textbf{BERT} \citep{vaswani2017attention} is a model based on transformers pretrained on a large corpus of English data. We use the BERT-Base-uncased model from \url{https://github.com/google-research/bert} for finetuning in our experiments. The maximum sequence length is set to 32 for GoEmotions and 128 for MIND.
    \item \textbf{ELECTRA} \citep{clark2020electra} is another pretrained transformer model. We use the ELECTRA-Base-uncased model from \url{https://github.com/google-research/electra} in our experiments. The maximum sequence length is set in the same way as BERT. 
    \item \textbf{ResNet50} \citep{he2016resnet} is a 50 layers deep convolutional neural network with residual connections. We use the implementation of ResNet50 from the tensorflow official models at \url{https://github.com/tensorflow/models/tree/master/official/vision/image_classification/resnet}. We adopt the pretrained network on ImageNet and finetune it in our Rank4Class framework. 
    \item \textbf{VGG16} \citep{simonyan2014very} is a convolutional neural network with 16 layers. We use the implementation of VGG16 from \url{https://github.com/geifmany/cifar-vgg} with input size $32\times 32$ in our experiments on CIFAR-10. 
\end{itemize}

\section{Experimental Settings}
\label{appendix:protocol}
We provide the implementation and hyper-parameter tuning details for reproducibility. For text classification tasks, we tokenize the raw sentences into word ids based on BERT vocabulary, and create the input masks and segment ids following standard BERT input formats. For MIND, we concatenate the title and abstract of each news by adding a ``[SEP]'' token between them. ELECTRA uses exactly the same input formats as BERT. The data processing of image datasets follow the standard methods as given in the original papers. We implement the Rank4Class pipeline based on TF-ranking \citep{rama2019tfrank}. We adopt TF-ranking's implementation of SoftmaxCE, MSE and 
all the ranking losses. 

For instance encoders, we use the default hyper-parameters suggested in the original implementations listed in \Cref{appendix:implementation} without further tuning. We use the same number of hidden units in the two layers of MLP for interactions, and tune the number of hidden units in \{64, 128, 256, 512\}. For all experiments, we use Adam \citep{kingma2017adam} and Adagrad \citep{john2011adaptive} as optimizers for training models. We tune the initial learning rate for all experiments in the range of $1e^{-7}$ to $0.1$ with a multiplicative step size of 3. Adam has a slight edge over Adagrad in most experiments and the best learning rate for Adam are $3e^{-6}$ and $1e^{-5}$ for most experiments. We use a batch size of 32 for text classification tasks, and a batch size of 64 for image classification tasks. For all configurations of the models and datasets, we train the model for 100,000 steps in text classification tasks, and 50 epochs in image classification tasks. We pick the best checkpoint on the validation set (if provided) for evaluation. 


\section{Results on different combinations of ranking losses and interaction patterns}
\label{appendix:results}
Here we provide the experiment results on different combinations of loss functions and interaction patterns. Besides PairLogLoss and ApproxNDCG included in Section~\ref{sec:exp}, we also include results from Gumbel-ApproxNDCG and MSE here. Note that we use the rescaled MSE as suggested in \citep{hui2021evaluation} on ImageNet dataset, since there is only 1 correct class in 1,000 classes, and regular MSE performs poorly on such imbalanced data. In particular, as the number of combinations is large, we display the results with respect to each dataset and instance encoder in each table for better visualization and comparison. Then in each table, we group the results from different combinations of loss functions and interaction patterns according to the three evaluation metrics. We underline the top 3 combinations (ties are included) under each metric, and use $*$ to mark the best. The results of all datasets and instance encoders are shown in Tables \ref{tab:goemo-bert} - \ref{tab:cifar10-vgg}.

Firstly, the conclusions on loss functions, interactions and evaluation metrics from the tables are the same as those in Sections~\ref{sec:exp-loss}~and~\ref{sec:exp-structure}. The Gumbel-ApproxNDCG performs well when more complicated interactions such as latent cross and concatenating embeddings are applied. MSE achieves the best top-1 Error in GoEmotions with BERT and CIFAR-10 with VGG16. It indicates that MSE can be effective in optimizing top-1 metrics in certain tasks, which aligns with the observation in \citet{hui2021evaluation}. However, MSE is not suitable for direct application in problems with highly imbalanced correct and incorrect classes as on the ImageNet classification task. A few rescaling techniques with hyper-parameters need to be applied for MSE to perform properly, which also increase the burden of hyper-parameters tuning. 
Besides, adding different combinations of losses and interaction patterns can always improve the performance, as indicated by $*$ under each metric. Note that we did not further tune the hyper-parameters of instance encoders for different architectures of Rank4Class. It is possible that the hyper-parameters of encoders are more suitable for the classical MCC models, and fine-tuning may further boost the performance of Rank4Class. Last but not least, we see that NDCG is more informative in evaluating MCC performance than Top-5 Error which creates many ties. For example, In \Cref{tab:mind-bert} for MIND with BERT, Top-5 Error fails to find the best performing model, while NDCG@5 can successfully differentiate them. 


\begin{table}[]
\centering
\caption{Results on GoEmotions with BERT as text encoder.}
\label{tab:goemo-bert}
\begin{tabular}{@{}ccccccc@{}}
\toprule
\multicolumn{7}{c}{GoEmotions + BERT} \\ \midrule
Metric & Interaction & SoftmaxCE & PairLogLoss & ApproxNDCG & \begin{tabular}[c]{@{}c@{}}Gumbel\\ ApproxNDCG\end{tabular} & MSE \\ \midrule
\multirow{3}{*}{Top-1 Error} & dot product & 41.00 & 41.79 & 41.42 & 41.11 & 41.20 \\
 & LC+MLP & 41.35 & 41.98 & {\ul 40.65}* & {\ul 40.70} & 41.31 \\
 & Concat+MLP & 41.02 & 41.55 & {\ul 40.92} & 41.59 & 41.63 \\ \midrule
\multirow{3}{*}{Top-5 Error} & dot product & 13.35 & {\ul 12.00} & 13.00 & 13.80 & 15.00 \\
 & LC+MLP & 12.85 & 12.25 & 12.40 & 12.85 & 14.30 \\
 & Concat+MLP & 12.40 & {\ul 11.85}* & {\ul 12.20} & 12.90 & 13.85 \\ \midrule
\multirow{3}{*}{NDCG@5} & dot product & 73.90 & 74.27 & 74.02 & 73.71 & 73.00 \\
 & LC+MLP & 74.11 & 74.22 & {\ul 74.62}* & {\ul 74.53} & 73.50 \\
 & Concat+MLP & 74.36 & 74.19 & {\ul 74.61} & 74.05 & 73.55 \\ \bottomrule
\end{tabular}
\end{table}

\begin{table}[]
\centering
\caption{Results on GoEmotions with ELECTRA as text encoder.}
\label{tab:goemo-electra}
\begin{tabular}{@{}ccccccc@{}}
\toprule
\multicolumn{7}{c}{GoEmotions + ELECTRA} \\ \midrule
Metric & Interaction & SoftmaxCE & PairLogLoss & ApproxNDCG & \begin{tabular}[c]{@{}c@{}}Gumbel\\ ApproxNDCG\end{tabular} & MSE \\ \midrule
\multirow{3}{*}{Top-1 Error} & dot product & 38.45 & 39.78 & {\ul 37.67} & 38.63 & 37.97 \\
 & LC+MLP & 38.69 & 39.13 & {\ul 37.67} & 38.45 & {\ul 37.36}* \\
 & Concat+MLP & 39.00 & 39.48 & 37.80 & 38.06 & 38.15 \\ \midrule
\multirow{3}{*}{Top-5 Error} & dot product & 10.45 & {\ul 8.80} & 9.25 & 10.65 & 10.30 \\
 & LC+MLP & 9.30 & 8.90 & {\ul 8.65}* & 10.00 & 10.15 \\
 & Concat+MLP & 9.75 & {\ul 8.70} & 9.15 & 9.90 & 10.40 \\ \midrule
\multirow{3}{*}{NDCG@5} & dot product & 76.96 & 77.17 & {\ul 78.03} & 76.72 & 77.37 \\
 & LC+MLP & 77.45 & 77.49 & {\ul 78.30}* & 77.41 & 77.69 \\
 & Concat+MLP & 77.06 & 77.47 & {\ul 77.97} & 77.35 & 77.24 \\ \bottomrule
\end{tabular}
\end{table}

\begin{table}[]
\centering
\caption{Results on MIND with BERT as text encoder.}
\label{tab:mind-bert}
\begin{tabular}{@{}ccccccc@{}}
\toprule
\multicolumn{7}{c}{MIND + BERT} \\ \midrule
Metric & Interaction & SoftmaxCE & PairLogLoss & ApproxNDCG & \begin{tabular}[c]{@{}c@{}}Gumbel\\ ApproxNDCG\end{tabular} & MSE \\ \midrule
\multirow{3}{*}{Top-1 Error} & dot product & 30.90 & 31.21 & 30.97 & 30.88 & 30.71 \\
 & LC+MLP & {\ul 30.20}* & {\ul 30.51} & 30.81 & 30.79 & 30.56 \\
 & Concat+MLP & {\ul 30.51} & 31.29 & 30.77 & 30.92 & 30.71 \\ \midrule
\multirow{3}{*}{Top-5 Error} & dot product & 6.15 & {\ul 5.25} & {\ul 5.25} & 5.75 & 6.95 \\
 & LC+MLP & 5.35 & 5.30 & {\ul 5.25} & 5.45 & 7.15 \\
 & Concat+MLP & 5.55 & {\ul 5.25} & {\ul 5.05}* & 5.65 & 7.30 \\ \midrule
\multirow{3}{*}{NDCG@5} & dot product & 82.78 & 83.25 & 83.36 & 83.11 & 82.59 \\
 & LC+MLP & {\ul 83.56}* & {\ul 83.54} & {\ul 83.43} & 83.28 & 82.48 \\
 & Concat+MLP & 83.31 & 83.27 & 83.35 & 83.16 & 82.37 \\ \bottomrule
\end{tabular}
\end{table}

\begin{table}[]
\centering
\caption{Results on MIND with ELECTRA as text encoder.}
\label{tab:mind-electra}
\begin{tabular}{@{}ccccccc@{}}
\toprule
\multicolumn{7}{c}{MIND + ELECTRA} \\ \midrule
Metric & Interaction & SoftmaxCE & PairLogLoss & ApproxNDCG & \begin{tabular}[c]{@{}c@{}}Gumbel\\ ApproxNDCG\end{tabular} & MSE \\ \midrule
\multirow{3}{*}{Top-1 Error} & dot product & 27.09 & 27.72 & 26.65 & 26.77 & 26.99 \\
 & LC+MLP & 26.89 & 27.79 & 26.74 & {\ul 26.55} & 26.92 \\
 & Concat+MLP & 26.78 & 27.53 & {\ul 26.40}* & {\ul 26.55} & 26.65 \\ \midrule
\multirow{3}{*}{Top-5 Error} & dot product & 4.25 & {\ul 3.75} & 4.30 & 4.25 & 5.35 \\
 & LC+MLP & 4.04 & {\ul 3.75} & {\ul 3.70}* & 3.90 & 5.20 \\
 & Concat+MLP & 4.25 & {\ul 3.70}* & 4.30 & 3.95 & 5.55 \\ \midrule
\multirow{3}{*}{NDCG@5} & dot product & 85.73 & 85.74 & 85.89 & 85.89 & 85.26 \\
 & LC+MLP & 85.67 & 85.71 & {\ul 86.13}* & {\ul 86.09} & 85.31 \\
 & Concat+MLP & 85.81 & 85.85 & {\ul 86.05} & 85.99 & 85.25 \\ \bottomrule
\end{tabular}
\end{table}

\begin{table}[]
\centering
\caption{Results on ImageNet with ResNet50 as image encoder.}
\label{tab:imagenet-resnet}
\begin{tabular}{@{}ccccccc@{}}
\toprule
\multicolumn{7}{c}{ImageNet + ResNet50} \\ \midrule
Metric & Interaction & SoftmaxCE & PairLogLoss & ApproxNDCG & \begin{tabular}[c]{@{}c@{}}Gumbel\\ ApproxNDCG\end{tabular} & \begin{tabular}[c]{@{}c@{}}MSE\\ (rescaled)\end{tabular} \\ \midrule
\multirow{3}{*}{Top-1 Error} & dot product & 23.74 & 23.64 & 23.62 & 23.65 & 26.46 \\
 & LC+MLP & 23.62 & 23.70 & 23.65 & {\ul 23.60} & 24.18 \\
 & Concat+MLP & 23.66 & 23.79 & {\ul 23.61} & {\ul 23.58}* & 23.83 \\ \midrule
\multirow{3}{*}{Top-5 Error} & dot product & 6.90 & 6.80 & 6.85 & 6.80 & 8.95 \\
 & LC+MLP & 6.80 & {\ul 6.75}* & 6.80 & 6.80 & 6.90 \\
 & Concat+MLP & {\ul 6.75}* & {\ul 6.75}* & 6.80 & 6.80 & 6.80 \\ \midrule
\multirow{3}{*}{NDCG@5} & dot product & 85.74 & 85.82 & 85.80 & 85.82 & 83.24 \\
 & LC+MLP & {\ul 85.84} & 85.80 & 85.82 & {\ul 85.84} & 85.44 \\
 & Concat+MLP & 85.83 & 85.77 & 85.82 & {\ul 85.85}* & 85.72 \\ \bottomrule
\end{tabular}
\end{table}

\begin{table}[]
\centering
\caption{Results on CIFAR-10 with VGG16 as image encoder.}
\label{tab:cifar10-vgg}
\begin{tabular}{@{}ccccccc@{}}
\toprule
\multicolumn{7}{c}{CIFAR-10 + VGG16} \\ \midrule
Metric & Interaction & SoftmaxCE & PairLogLoss & ApproxNDCG & \begin{tabular}[c]{@{}c@{}}Gumbel\\ ApproxNDCG\end{tabular} & \begin{tabular}[c]{@{}c@{}}MSE\end{tabular} \\ \midrule
\multirow{3}{*}{Top-1 Error} & dot product & 6.56 & 6.43 & 6.41 & 6.45 & 6.48 \\
 & LC+MLP & {\ul 6.40}* & 6.42 & 6.46 & {\ul 6.40}* & 6.42 \\
 & Concat+MLP & 6.43 & 6.44 & 6.44 & 6.47 & {\ul 6.40}* \\ \midrule
\multirow{3}{*}{Top-5 Error} & dot product & 0.20 & {\ul 0.15}* & {\ul 0.15}* & 0.20 & {\ul 0.15}* \\
 & LC+MLP & 0.20 & {\ul 0.15}* & 0.25 & 0.20 & 0.25 \\
 & Concat+MLP & 0.20 & 0.20 & 0.25 & {\ul 0.15}* & 0.35 \\ \midrule
\multirow{3}{*}{NDCG@5} & dot product & 97.19 & 97.21 & {\ul 97.23}* & 97.21 & 97.16 \\
 & LC+MLP & {\ul 97.22} & 97.21 & 97.17 & 97.17 & 97.14 \\
 & Concat+MLP & {\ul 97.23}* & 97.21 & 97.19 & {\ul 97.22} & 97.16 \\ \bottomrule
\end{tabular}
\end{table}

\end{document}